
\documentclass{article}
\usepackage{arxiv}



\usepackage{hyperref}
\usepackage{microtype}

\usepackage{amssymb}
\usepackage{amsmath}
\usepackage{amsthm}
\usepackage{mathdots}
\usepackage{mathtools}

\usepackage{tensor}

\usepackage{urwchancal}
\DeclareFontFamily{OT1}{pzc}{}
\DeclareFontShape{OT1}{pzc}{m}{it}{<-> s * [1.10] pzcmi7t}{}
\DeclareMathAlphabet{\mathpzc}{OT1}{pzc}{m}{it}

\usepackage{graphicx}
\usepackage{ifpdf}
\ifpdf
\usepackage{epstopdf}
\epstopdfsetup{update,prepend}
\PrependGraphicsExtensions{.svg}
\DeclareGraphicsRule{.svg}{pdf}{.pdf}{
  `inkscape -z -D #1 --export-pdf=\noexpand\OutputFile
}
\fi

\usepackage{xypic}

\newtheorem{theorem}{Theorem}[section]
\newtheorem{lemma}[theorem]{Lemma}

\newtheorem{proposition}[theorem]{Proposition}





\providecommand{\R}{\mathbb{R}}


\providecommand{\SO}{\mathbf{SO}}

\providecommand{\SE}{\mathbf{SE}}
\providecommand{\SOT}{\mathbf{SOT}}

\providecommand{\MR}{\mathbf{MR}}

\providecommand{\grpG}{\mathbf{G}}


\providecommand{\gothg}{\mathfrak{g}}


\providecommand{\so}{\mathfrak{so}}
\providecommand{\se}{\mathfrak{se}}


\providecommand{\Sph}{\mathrm{S}}

\providecommand{\calM}{\mathcal{M}}
\providecommand{\calN}{\mathcal{N}}

\providecommand{\calU}{\mathcal{U}}







\providecommand{\Sym}{\mathbb{S}} 






\providecommand{\eb}{\mathbf{e}} 







\DeclareMathOperator{\Ad}{Ad}

\providecommand{\id}{\mathrm{id}} 





\providecommand{\tT}{\mathrm{T}} 
\providecommand{\td}{\mathrm{d}}
\providecommand{\tD}{\mathrm{D}}

\providecommand{\ddt}{\frac{\td}{\td t}}





\providecommand{\mr}[1]{{#1}^\circ} 

\usepackage{accents}
\makeatletter
\providecommand{\scirc}{%
    \hbox{\fontfamily{\rmdefault}\fontsize{0.4\dimexpr(\f@size pt)}{0}\selectfont{\raisebox{-0.52ex}[0ex][-0.52ex]{$\circ$}}}}

\makeatother

\mathchardef\mhyphen="2D








\providecommand{\etal}{\textit{et al.}~}




\usepackage[utf8]{inputenc}
\usepackage{amsmath}
\usepackage{amsfonts}
\usepackage{amssymb}
\usepackage[all]{xy}



\renewcommand{\mr}[1]{#1^\circ}

\usepackage{hyperref}
\providecommand{\tT}{\mathrm{T}}
\providecommand{\vinsG}{\mathbf{SLAM}^\text{\tiny VI}_n(3)}
\providecommand{\vinsg}{\mathfrak{slam}^\text{\tiny VI}_n(3)}
\providecommand{\vinsT}{\mathcal{T}_n^\text{\tiny VI}(3)}
\providecommand{\vinsM}{\mathcal{M}_n^\text{\tiny VI}(3)}
\providecommand{\vinsN}{\mathcal{N}_n^\text{\tiny  V}(3)}




\title{\LARGE \bf
An Equivariant Filter for Visual Inertial Odometry
}

\headertitle{An Equivariant Filter for Visual Inertial Odometry}

\author{
    \href{https://orcid.org/0000-0003-4391-7014}{\includegraphics[scale=0.06]{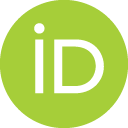}\hspace{1mm}
Pieter van Goor}
\\
    Systems Theory and Robotics Group \\
    Australian Centre for Robotic Vision \\
	Australian National University \\
    ACT, 2601, Australia \\
    \texttt{Pieter.vanGoor@anu.edu.au} \\
	\And	\href{https://orcid.org/0000-0002-7803-2868}{\includegraphics[scale=0.06]{orcid.png}\hspace{1mm}
    Robert Mahony}
\\
    Systems Theory and Robotics Group \\
    Australian Centre for Robotic Vision \\
	Australian National University \\
    ACT, 2601, Australia \\
	\texttt{Robert.Mahony@anu.edu.au} \\
}

\begin{document}





\maketitle
\thispagestyle{empty}
\pagestyle{empty}

\begin{abstract}
Visual Inertial Odometry (VIO) is of great interest due the ubiquity of devices equipped with both a monocular camera and Inertial Measurement Unit (IMU).
Methods based on the extended Kalman Filter remain popular in VIO due to their low memory requirements, CPU usage, and processing time when compared to optimisation-based methods.
In this paper, we analyse the VIO problem from a geometric perspective and propose a novel formulation on a smooth quotient manifold where the equivalence relationship is the well-known invariance of VIO to choice of reference frame.
We propose a novel Lie group that acts transitively on this manifold and is compatible with the visual measurements.
This structure allows for the application of Equivariant Filter (EqF) design leading to a novel filter for the VIO problem.
Combined with a very simple vision processing front-end, the proposed filter demonstrates state-of-the-art performance on the EuRoC dataset compared to other EKF-based VIO algorithms.
\end{abstract}

\section{Introduction}

Visual Inertial Odometry (VIO) belongs to the more general class of spatial awareness problems often referred to as Simultaneous Localisation and Mapping (SLAM).
SLAM algorithms are a core technology in mobile robotics and have been the subject of significant research for at least 30 years \cite{2006_durrant-whyte_SimultaneousLocalisation}.
The particular problem of Visual Inertial SLAM (VI-SLAM), where the only available sensors are an Inertial Measurement Unit (IMU) and a monocular camera continues to see substantial interest due the low-cost of the required sensors and the breadth of applications \cite{2018_delmerico_BenchmarkComparison}.
Visual inertial odometry and visual inertial SLAM share the same formulation, however, the odometry problem focuses on estimating the robot trajectory while the SLAM problem places equal emphasis on the map.
In practice, the difference is characterised by how long feature points are stored and whether full loop-closure is considered in the algorithms.
State-of-the-art solutions to VIO can be broadly classified into optimisation-based or Extended Kalman Filter (EKF)-based systems.
Optimisation-based solutions, including ORB-SLAM 3 \cite{2020_campos_ORBSLAM3Accurate}, OKVIS \cite{2015_leutenegger_KeyframebasedVisual} and VINS-Mono \cite{2018_qin_VinsmonoRobust}, treat VI-SLAM as a non-linear least squares problem and optimise over a moving window of data measurements.
In contrast, EKF-based solutions, such as ROVIO \cite{2015_bloesch_RobustVisual}, MSCKF \cite{2007_mourikis_MultistateConstraint} and SVO \cite{2016_forster_SVOSemidirect}, model the state estimate as a normal distribution, linearise the state equations and apply an EKF to the resulting error coordinates.
While optimisation methods typically achieve the highest accuracy in computing the robot's trajectory, EKF methods remain of interest due to their lower memory requirements and processing times \cite{2018_delmerico_BenchmarkComparison}.

\begin{figure}[!tb]
    \centering
    \includegraphics[width=0.8\linewidth]{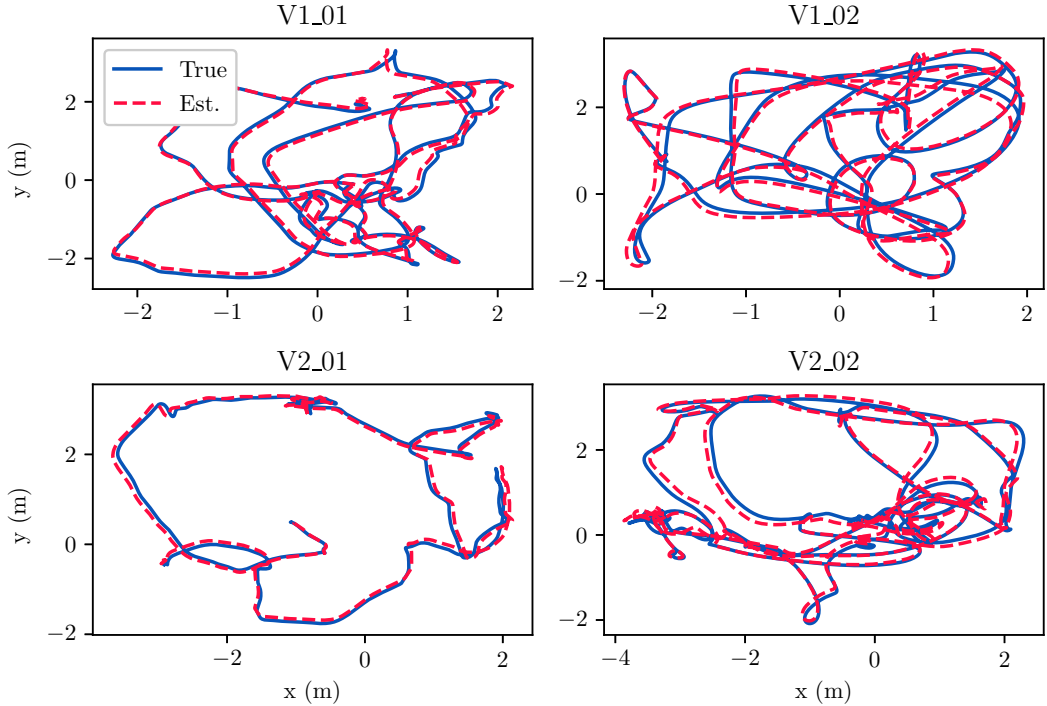}
    \caption{The true and estimated trajectory of the robot in the considered EuRoC sequences.}
    \label{fig:trajectories}
\end{figure}

Classical EKF designs for VIO are well-known to grow overconfident in their state over time \cite{2004_castellanos_LimitsConsistency,2006_bailey_ConsistencyEKFSLAM}.
This inconsistency is a direct consequence of unobservability of the visual inertial SLAM problem expressed in inertial coordinates \cite{2004_andrade-cetto_EffectsPartial, 2006_lee_ObservabilityObservability, 2010_huang_ObservabilitybasedRules}.
In \cite{2011_kelly_VisualInertialSensor}, Kelly and Sukhatme provide a complete characterisation of the non-linear observability of visual inertial SLAM, and show that the problem has a four-dimensional unobservable subspace corresponding to the position and yaw of the reference frame.
This issue can be mitigated by careful choice of linearisation points of the EKF to ensure the observability of the linearised system reflects that of the true system \cite{2010_huang_ObservabilitybasedRules}.
While this improves the consistency of the state estimate, it does not avoid the existence of unobservable subspaces in the state space and, as a consequence, the underlying Riccati equation may grow unbounded \cite{1968_wonham_MatrixRiccati}.
Recently, a number of authors have exploited the Invariant Extended Kalman Filter \cite{2015_barrau_InvariantExtended} and the novel Lie group structure proposed in \cite{2016_barrau_EKFSLAMAlgorithm} to develop filters for SLAM
\cite{2016_barrau_EKFSLAMAlgorithm,2017_wu_InvariantEKFVINS,2017_zhang_ConvergenceConsistency,2018_heo_ConsistentEKFbased}.
These filters still suffer from unobservable states, however, invariance properties are exploited to ensure consistency of the filter, and the unbounded growth in the covariance of the unobservable state \cite{1968_wonham_MatrixRiccati} can be managed using heuristics.
A further limitation of the symmetry is that it is not compatible with visual feature outputs, and although the state propagation linearisation is exact, the IEKF suffers from output linearisation error for the visual SLAM problem.
Mahony \etal \cite{2017_mahony_GeometricNonlinear} introduced a quotient manifold structure, termed the SLAM-manifold, that overcomes the observability issues by providing a fully observable state space for the SLAM problem that is geometrically motivated.
Van Goor \etal \cite{2019_vangoor_GeometricObserver} introduced a new symmetry for the SLAM problem that acts transitively on the SLAM-manifold, and in addition is compatible with visual point feature measurements, overcoming the limitation of the symmetry \cite{2016_barrau_EKFSLAMAlgorithm} associated with linearisation of the output function.
However, although this symmetry acts transitively on the SLAM-manifold \cite{2017_mahony_GeometricNonlinear}, the IEKF is only formulated for systems posed directly on a Lie-group and cannot be applied to systems on homogeneous spaces such as is the case for the SLAM-manifold formulation with either of the symmetries \cite{2016_barrau_EKFSLAMAlgorithm,2017_mahony_GeometricNonlinear} or \cite{2019_vangoor_GeometricObserver}.
In contrast, the recently proposed Equivariant Filter (EqF) \cite{2020_vangoor_EquivariantFilter} is explicitly posed for systems on general homogeneous spaces and can be applied.



In this paper, we derive a novel VIO filter based on equivariance principles that has state-of-the-art performance.
We show that there is a natural invariance in the traditional inertial coordinates of the visual inertial SLAM formulation associated with the gauge transformation that leads to the unobservability properties that are well known \cite{2011_kelly_VisualInertialSensor}.
We show that the quotient of the inertial coordinates by the gauge transform generates a smooth manifold we term the \emph{VI-SLAM manifold} on which the system is fully observable.
The symmetry group first proposed in \cite{2019_vangoor_GeometricObserver} is easily generalised to a new Lie-group, we term the \emph{VI-SLAM group},
that acts transitively on the \emph{VI-SLAM manifold} and is compatible with the vision measurements of points features.
This provides the geometric structure necessary to implement the Equivariant Filter (EqF) \cite{2020_vangoor_EquivariantFilter}.
The resulting algorithm benefits from all the advantages of a complete Lie group symmetry that is compatible with the measurements as well as being fully observable, overcoming limitations of previous invariant filter algorithms for visual inertial SLAM problems.
Finally, we demonstrate the performance of the proposed system on sequences in the EuRoC dataset and achieve state of the art results compared to EKF-based algorithms in spite of the simplicity of our front-end image processing system.

\section{Preliminaries}

For a background on smooth manifolds, Lie groups and their actions, the authors recommend \cite[Chapter 7]{2012_lee_SmoothManifolds}.
For a smooth manifold $\calM$, let $\tT_\xi \calM$ denote the tangent space of $\calM$ at $\xi$ and let $\tT\calM$ denote the tangent bundle.
Given a differentiable function between smooth manifolds $h:\calM \to \calN$, the linear map
\begin{align*}
    \tD_\xi |_{\xi'} h(\xi): \tT_{\xi'} \calM &\to  \tT_{h({\xi'})} \calN, \\
    v &\mapsto \tD_\xi |_{\xi'} h(\xi)[v],
\end{align*}
denotes the differential of $h$ with respect to the argument $\xi$ evaluated at $\xi'$.
The map
\begin{align*}
    \td h: \tT \calM &\to \tT \calN, \\
    (\xi',v) &\mapsto (h(\xi'), \tD_\xi |_{\xi'} h(\xi) [v]),
\end{align*}
denotes the differential of $h$ where the base point is implicit in the argument.
That is, given $ v \in \tT_{\xi'} \calM$ for some $\xi' \in \calM$, and a function $h: \calM \to \calN$, we write
\begin{align*}
    \td h [v] := \tD_\xi |_{\xi'} h(\xi) [v] \in \tT_{h(\xi')} \calN.
\end{align*}

We make extensive use of a number of Lie groups.
For a Lie group $\grpG$ we denote the Lie algebra $\gothg$.
The Special Orthogonal group $\SO(3)$ has elements $R \in \SO(3)$ and acts on $q \in \R^3$ by $R(q) = Rq$.
The Special Euclidean group $\SE(3)$ has elements $P = (R_P, x_P) \in \SO(3) \ltimes \R^3$ and acts on $q \in \R^3$ by $P(q) = R_P q + x_P$.
The Extended Special Euclidean group $\SE_2(3)$ has elements $(A, w) \in \SE(3) \times \R^3$ with group multiplication $(A_1, w_1) \cdot (A_2, w_2) = (A_1 A_2, w_1 + R_A w_2)$ \cite{2015_barrau_InvariantExtended}.
The positive multiplicative reals $\MR(1)$ has elements $c > 0$.
The Scaled Orthogonal Transformations $\SOT(3)$ has elements $Q = (R_Q, c_Q) \in \SO(3) \times \MR(1)$ and acts on $q \in \R^3$ by $Q(q) = c_Q R_Q q$ \cite{2019_vangoor_GeometricObserver}.

For any $\Omega \in \R^3$ define
\begin{align*}
    \Omega^\times := \begin{pmatrix}
                0 & -\Omega_3 &  \Omega_2 \\
         \Omega_3 &         0 & -\Omega_1 \\
        -\Omega_2 &  \Omega_1 &         0 \\
    \end{pmatrix}.
\end{align*}
Then the Lie algebra $\so(3) = \{ \Omega^\times \in \R^{3 \times 3} | \Omega \in \R^3 \}$, and
$$\Omega^\times p = \Omega \times p = -p \times \Omega = - p^\times \Omega,$$
for any $\Omega, p \in \R^3$, where $\times$ is the usual cross product.

For any $\Omega, v \in \R^3$ define $U(\Omega, v) \in \se(3)$ such that
for any $P = (R_P, x_P) \in \SE(3)$,
\begin{align*}
    \dot{P} = P U(\Omega, v)
    \; \Leftrightarrow \;
    \dot{R}_P = \dot{R} \Omega^\times \text{ and } \dot{x}_P = R_P v.
\end{align*}


A (right) group action of a Lie group $\grpG$ on a smooth manifold $\calM$ is a smooth function $\phi : \grpG \times \calM \to \calM$ satisfying
\begin{align*}
    \phi(XY, \xi) = \phi(Y, \phi(X, \xi)), \quad \phi(\id, \xi) = \xi.
\end{align*}
Given such a map, we denote by $\phi_X$ and $\phi_\xi$ the partial maps
\begin{align*}
    \phi_X &: \calM \to \calM, & \phi_X(\xi) &:= \phi(X, \xi), \\
    \phi_\xi &: \grpG \to \calM, & \phi_\xi(X) &:= \phi(X, \xi).
\end{align*}

Denote the 2-sphere by $\Sph^2 = \{y \in \R^3 \; \vline \; \Vert y \Vert = 1 \}$.
Let $\pi_{\Sph^2}(q) := q / \Vert q \Vert$ be the sphere projection for all $q \in \R^3$ with $q \neq 0$.
For $\eb_1 = (1,0,0)$ let $\vartheta_{\eb_1} : \calU_{\eb_1} \subset \Sph^2 \to \R^2$ be the stereographic projection as in \cite[Chapter 1]{2012_lee_SmoothManifolds}, and for every $\eta \in \Sph^2 \setminus \{\eb_1\}$, define $\vartheta_{\eta} : \calU_\eta \subset \Sph^2 \to \R^2$ by
\begin{align}
    \vartheta_\eta(y) := \vartheta_{\eb_1}( y - 2 \zeta \zeta^\top y  ), \quad \zeta := \pi_{\Sph^2}(\eb_1 - \eta), 
    \label{eq:StereoGraphic}
\end{align}
where $\calU_\eta = \Sph^2 \setminus \{-\eta\}$.
Then for each $\eta \in \Sph^2$, $\vartheta_\eta$ is a local coordinate chart with $\vartheta_\eta(\eta) = 0$.

Let 
\[
\Sym_+(n) = \{ S = S^\top \in \R^{n\times n} | x^\top S x > 0, \text{ for all } x \not= 0  \in \R^n \}
\]
denote the set of positive definite symmetric matrices of dimension of $n$. 

\section{Problem Description}

Choose an arbitrary inertial reference frame $\{0\}$, and consider a robot equipped with an IMU and a camera, both of which are rigidly attached.
For simplicity we identify the IMU frame $\{ I\}$ with the robot's body-fixed frame $\{ B\}$.
The inertial coordinates for the visual inertial SLAM problem are \begin{align}
    (P, v, p_1, ..., p_n) \in \SE(3) \times \R^3 \times (\R^3)^n, \label{eq:total_space_state}
\end{align}
where,
\begin{itemize}
    \item $P = (R_P, x_P) \in \SE(3)$ is the pose of the IMU $\{B\}$ with respect to the inertial frame $\{0\}$,
    \item $v \in \R^3$ is the linear velocity of the robot in the body-fixed frame $\{B\}$,
    \item $p_i \in \R^3$ is the coordinates of landmark $i$ in the inertial frame $\{0\}$.
\end{itemize}
We frequently use the notation $(P, v, p_i) \equiv (P, v, p_1,...,p_n)$ as shorthand.
To ensure that the visual measurements are always well defined we assume that the trajectory considered never passes through an exception set $\mathcal{E} \subset \SE(3) \times \R^3 \times (\R^3)^n$ corresponding to all situations where the camera centre coincides with a landmark point.
To formalise this, we define the visual inertial SLAM (VI-SLAM) total space
\[
\vinsT := \SE(3) \times \R^3 \times (\R^3)^n - \mathcal{E}
\]
and consider the visual inertial SLAM problem on $\vinsT$.
Note that $\vinsT$ is an open subset of a smooth manifold and as such is itself a smooth manifold.

Let the acceleration due to gravity in the inertial frame $\{ 0 \}$ be $g \eb_3$, where $g \approx 9.81$ m/s$^2$ and $\eb_3 \in \Sph^2$ is standard gravity direction in the inertial frame.
The ideal IMU measurements are $(\Omega, a) \in \R^3 \times \R^3$, the angular velocity and linear acceleration of the IMU, respectively.
The VIO system dynamics are
\begin{equation}
    \ddt (P, v, p_i) = f_{(\Omega, a)} (P, v, p_i), \label{eq:system_function}
\end{equation}
\vspace*{-0.5cm} 
\begin{align*}
    \dot{P} &= P U(\Omega, v), &
    \dot{v} &= -\Omega^\times v + a - g R_P^\top \eb_3, &
    \dot{p}_i &= 0.
\end{align*}

The camera measurements are modelled as $n$ bearing measurements of the landmarks $p_i$ in the camera frame $\{C\}$ on the manifold $\vinsN := (\Sph^2)^n$ where the superscript ``V'' stands for visual measurements.
Let $T_C \in \SE(3)$ denote the pose of the camera frame $\{C\}$ with respect the body frame $\{ B\}$.
We do not consider the online calibration of $T_C$ in the present work.
Then the measurement function $h : \vinsT \to \vinsN$ is given by
\begin{align}
    h(P, v, p_i) &:= \left( h^1(P, v, p_i), ... h^n(P, v, p_i) \right), \label{eq:measurement_function} \\
    h^k(P, v, p_i)) &:= \pi_{\Sph^2} \left( (P T_C)^{-1} (p_k) \right). \notag
\end{align}
Modelling the bearing measurements directly on the sphere rather than the image plane enables the proposed system to model a wide variety of monocular cameras.

\subsection{Invariance of Visual Inertial SLAM}

Let $\eb_3$ be the standard gravity direction and define the semi-direct product group
\begin{align*}
    \Sph^1 \ltimes_{\eb_3} \R^3 := \{
        (\theta, x) \; \vline \; \theta \in \Sph^1, x \in \R^3
    \},
\end{align*}
with group product, identity and inverse
\begin{align*}
    (\theta^1, x^1) \cdot (\theta^2, x^2) &= (\theta^1 + \theta^2, x^1 + R_{\eb_3}(\theta^1) x^2), \\
    \id_{\Sph^1 \ltimes_{\eb_3} \R^3} &= (0, 0_{3 \times 1}), \\
    (\theta, x)^{-1} &= (-\theta, - R_{\eb_3}(\theta) x ),
\end{align*}
where $R_{\eb_3}(\theta) \in SO(3)$ is the anti-clockwise rotation of an angle $\theta$ about the axis $\eb_3$.
Then $\Sph^1 \ltimes_{\eb_3} \R^3$ may be identified with the subgroup
\begin{align*}
    \SE_{\eb_3}(3) := \{
        (R, x) \in \SE(3) \; \vline \; R \eb_3 = \eb_3
    \} \leq \SE(3).
\end{align*}

Define $\alpha : \SE_{\eb_3} \times \vinsT \to \vinsT$ by
\begin{align*}
    \alpha(S, (P, v, p_i)) := (S^{-1} P, v, S^{-1}(p_i)).
\end{align*}
Then $\alpha$ is a (right) group action of $\SE_{\eb_3}(3)$ on $\vinsT$.
For a given $S \in \SE_{\eb_3}(3)$, the action $\alpha(S, \cdot)$ represents a change of inertial reference frame from $\{0\}$ to $\{1\}$ where $S$ is the pose of $\{1\}$ with respect to $\{0\}$.
Moreover, any change of reference $S \in \SE_{\eb_3}(3)$ leaves the direction of gravity $\eb_3$ unchanged.

\begin{proposition} \label{prop:invariance_action}
The system function \eqref{eq:system_function} and measurement function \eqref{eq:measurement_function} are invariant with respect to $\alpha$, that is,
\begin{align*}
    f_{(\Omega, a)} (\alpha(S, (P, v, p_i))) &= \td \alpha_S f_{\Omega, a} (P, v, p_i), \\
    h(\alpha(S, (P, v, p_i))) &= h(P, v, p_i),
\end{align*}
for any $S \in \SE_{\eb_3}(3)$.
\end{proposition}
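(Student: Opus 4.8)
The plan is to treat the two invariance claims separately and componentwise. I expect the output invariance to be a one-line cancellation, while the system invariance reduces to checking a single term in the $v$-dynamics, which is exactly where the defining property of $\SE_{\eb_3}(3)$ does all the work. Writing $\bar{\xi} = \alpha(S, (P,v,p_i)) = (S^{-1}P, v, S^{-1}(p_i))$, I would first substitute into the $k$-th measurement component. The $\SE(3)$ multiplication gives $(S^{-1}P)T_C = S^{-1}(PT_C)$, so $((S^{-1}P)T_C)^{-1} = (PT_C)^{-1}S$. Applying this to the transformed landmark $S^{-1}(p_k)$ yields $(PT_C)^{-1}(S(S^{-1}(p_k))) = (PT_C)^{-1}(p_k)$, since $S$ and $S^{-1}$ cancel under the $\SE(3)$ action on $\R^3$. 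Passing through $\pi_{\Sph^2}$ then gives $h^k(\bar{\xi}) = h^k(P,v,p_i)$, establishing the measurement identity.

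For the system invariance, I would compute the right-hand side $\td\alpha_S f_{(\Omega, a)}(P,v,p_i)$ by differentiating $\alpha_S$ componentwise. On the pose, $\alpha_S$ is left translation $P \mapsto S^{-1}P$, whose differential sends $\dot{P} = P U(\Omega,v)$ to $S^{-1}P U(\Omega,v)$; on the velocity it is the identity, passing $\dot{v}$ through unchanged; on each landmark it is the affine map $p_i \mapsto R_S^\top p_i - R_S^\top x_S$, whose differential sends $\dot{p}_i = 0$ to $0$. I would then evaluate the left-hand side $f_{(\Omega, a)}(\bar{\xi})$ by substituting $\bar{P} = S^{-1}P$, $\bar{p}_i = S^{-1}(p_i)$ and the unchanged velocity into the dynamics. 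The pose equation matches immediately, because left translation by $S^{-1}$ commutes with right multiplication by $U(\Omega,v)$, and the landmark equation matches trivially since both sides vanish.

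The only substantive step, and the reason the symmetry group is restricted to $\SE_{\eb_3}(3)$ rather than all of $\SE(3)$, is the $v$-component. The pushed-forward dynamics carry the gravity term $-g R_P^\top \eb_3$, whereas evaluating $f_{(\Omega, a)}$ at $\bar{\xi}$ produces $-g R_{\bar{P}}^\top \eb_3$, while the remaining terms $-\Omega^\times v + a$ are untouched by the frame change. Since $R_{\bar{P}} = R_S^\top R_P$, I have $R_{\bar{P}}^\top \eb_3 = R_P^\top R_S \eb_3$, and the two gravity terms agree precisely because $R_S \eb_3 = \eb_3$ by the definition of $\SE_{\eb_3}(3)$. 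This is the crux of the argument: the gauge freedom must preserve the gravity direction, which is exactly the constraint built into $\SE_{\eb_3}(3)$, and it is the step I expect to require the most care to present cleanly.
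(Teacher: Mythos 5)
Your proposal is correct and follows essentially the same route as the paper's proof: a componentwise substitution in which the measurement invariance follows from the cancellation $((S^{-1}P)T_C)^{-1}S^{-1} = (PT_C)^{-1}$ and the system invariance hinges on $R_{\bar P}^\top \eb_3 = R_P^\top R_S \eb_3 = R_P^\top \eb_3$ using the defining property $R_S\eb_3 = \eb_3$ of $\SE_{\eb_3}(3)$. The paper's version is merely terser, leaving the differential of $\alpha_S$ implicit where you spell it out componentwise.
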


A proof is provided in Appendix~\ref{app:proofs}.


\subsection{VI-SLAM Manifold}

We exploit the invariance of the dynamics and measurements to propose a new state space where the system is fully observable.
Given any $(P, v, p_i) \in \vinsT$, define the equivalence class
\begin{align*}
    [P, v, p_i] &= \{
        \alpha(S, (P, v, p_i)) \; \vline \; S \in \SE_{\eb_3}(3)
    \}.
\end{align*}
Since $\alpha$ is a proper group action the associated quotient is a smooth manifold that we term the \emph{Visual Inertial SLAM (VI-SLAM) manifold}
\begin{align*}
    \vinsM &:= \vinsT / \alpha
    = \{[P, v, p_i] \; \vline \;  (P, v, p_i) \in \vinsT \},
\end{align*}
with projection map $\pi(P,v,p_i) := [P, v, p_i]$.
The induced system and measurements functions on $\vinsM$ are well-defined due to their invariance with respect to $\alpha$.
Transformation by the subgroup $\SE_{\eb_3}(3)$ corresponds directly the unobservable states in the inertial SLAM coordinates \cite{2011_kelly_VisualInertialSensor}.
It follows that the SLAM problem posed on the VI-SLAM manifold is fully observable since the quotient operation factors out the unobservable states while preserving the observable information.


\section{Equivariant Filter for VI-SLAM}
\label{sec:eqf-design}

\subsection{Symmetry of VI-SLAM}

The Equivariant Filter (EqF) \cite{2020_vangoor_EquivariantFilter} exploits symmetries of systems on homogeneous spaces to design a filter about a fixed linearisation point on the state manifold with a constant output linearisation.

Let $\vinsG = \SE_2(3) \times \SOT(3)^n$ denote the \emph{VI-SLAM Group} \cite{2019_vangoor_GeometricObserver} with group product, identity and inverse given by
\begin{align*}
(A^1, w^1, Q^1_i) \cdot (A^2, w^2, Q^2_i) = (A^1 A^2, w^1 + R_{A^1} w^2, Q^1_i Q^2_i), \\
\id = (I_4, 0_{3 \times 1}, (I_3)_i), \quad
(A,w,Q_i)^{-1} = (A^{-1}, -R_A w, Q_i^{-1}).
\end{align*}

This Lie group is a symmetry group that acts transitively on $\vinsM$ and $\vinsN$ in a compatible manner that makes both the system function $f$ \eqref{eq:system_function} and the output function $h$ \eqref{eq:measurement_function} equivariant.
The following lemmas are proved in Appendix \ref{app:proofs}. 

\begin{lemma} \label{lem:state_action}
The map $\Phi : \vinsG \times \vinsT \to \vinsT$ defined by
\begin{align}
    \Phi & ((A,w,Q_i), (P, v, p_i)) \notag \\
    &:= (PA, R_A^\top(v - w), PA T_C Q_i^{-1} T_C^{-1} P^{-1} (p_i)), \label{eq:total_state_action}
\end{align}
is a transitive (right) group action.
Moreover, the induced action $\phi : \vinsG \times \vinsM \to \vinsM$, given by
\begin{align}
    \phi((A,w,Q_i), [P, v, p_i]) := [\Phi((A,w,Q_i), (P, v, p_i))], \label{eq:manifold_state_action}
\end{align}
is well-defined.
\end{lemma}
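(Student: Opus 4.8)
The plan is to verify the group-action axioms for $\Phi$ by direct component-wise computation, then establish transitivity constructively, and finally deduce well-definedness of $\phi$ from the fact that $\Phi$ commutes with $\alpha$. Before the axioms, I would check that $\Phi$ genuinely maps into $\vinsT$: the $i$-th transformed landmark expressed in the transformed camera frame is $Q_i^{-1}(q_i)$ with $q_i := T_C^{-1}P^{-1}(p_i)$, which is nonzero since $\vinsT$ excludes the exception set $\mathcal{E}$ and $Q_i^{-1}$ is invertible, so no landmark coincides with the camera centre after the transformation. The identity axiom $\Phi(\id, (P,v,p_i)) = (P,v,p_i)$ then follows on substituting $A = I_4$, $w = 0$ and $Q_i = I_3$, using $T_C T_C^{-1} = \id$ in the landmark slot.

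For the composition axiom $\Phi(XY, \xi) = \Phi(Y, \Phi(X, \xi))$ with $X = (A^1, w^1, Q^1_i)$ and $Y = (A^2, w^2, Q^2_i)$, I would treat the three components separately. The pose component reduces to associativity of right multiplication, $(PA^1)A^2 = P(A^1 A^2)$. The velocity component follows from the semidirect structure of $\SE_2(3)$: applying $R_{A^2}^\top(\,\cdot\, - w^2)$ after $R_{A^1}^\top(\,\cdot\, - w^1)$ yields $R_{A^2}^\top R_{A^1}^\top (v - w^1 - R_{A^1}w^2)$, which matches the direct action of $XY$ since its rotation is $R_{A^1}R_{A^2}$ and its velocity offset is $w^1 + R_{A^1}w^2$. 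The landmark component is the only delicate one: after applying $X$ the base pose becomes $PA^1$, so the second application contributes $(PA^1)A^2 T_C (Q^2_i)^{-1} T_C^{-1}(PA^1)^{-1}$, and the telescoping cancellation $(PA^1)^{-1}PA^1 = \id$ together with $(Q^2_i)^{-1}(Q^1_i)^{-1} = (Q^1_i Q^2_i)^{-1}$ collapses the composite to $PA^1A^2\, T_C (Q^1_iQ^2_i)^{-1}T_C^{-1}P^{-1}(p_i)$, exactly the direct action of $XY$.

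Next I would establish transitivity. Given a source $(P,v,p_i)$ and a target $(P',v',p'_i)$ in $\vinsT$, the pose and velocity equations $PA = P'$ and $R_A^\top(v-w)=v'$ uniquely determine $A = P^{-1}P'$ and $w = v - R_A v'$. Rearranging the landmark equation then shows it suffices to find $Q_i \in \SOT(3)$ with $Q_i(q'_i) = q_i$, where $q'_i := T_C^{-1}(P')^{-1}(p'_i)$. Both $q_i$ and $q'_i$ are nonzero precisely because $\vinsT$ excludes $\mathcal{E}$, so a scaled rotation matching their magnitudes, $c_{Q_i} = \Vert q_i\Vert / \Vert q'_i\Vert$, and aligning their directions by any rotation between the two unit vectors achieves $Q_i(q'_i) = q_i$.

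Finally, well-definedness of $\phi$ reduces to showing that $\Phi_X$ carries each $\alpha$-orbit into a single $\alpha$-orbit, and I would prove the stronger claim that $\Phi$ commutes with $\alpha$, namely $\Phi(X, \alpha(S,\xi)) = \alpha(S, \Phi(X,\xi))$ for all $S \in \SE_{\eb_3}(3)$ and $X \in \vinsG$. Substituting $\alpha(S,(P,v,p_i)) = (S^{-1}P, v, S^{-1}(p_i))$ into $\Phi$ and using the cancellation $(S^{-1}P)^{-1}S^{-1} = P^{-1}$ in the landmark term, each component matches $\alpha(S, \cdot)$ applied to $\Phi(X,(P,v,p_i))$; the velocity component is unaffected because neither $\alpha$ nor the velocity output of $\Phi$ involves $P$ or the $p_i$. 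Commutation gives $\Phi_X([\xi]) = [\Phi_X(\xi)]$ independent of the representative, and smoothness of $\phi$ follows from smoothness of $\Phi$ together with the submersion property of the quotient projection $\pi$. I expect the landmark composition identity to be the main bookkeeping obstacle, and the non-coincidence of landmark and camera centre encoded by $\mathcal{E}$ to be the essential ingredient that makes transitivity possible.
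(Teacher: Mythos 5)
Your proposal is correct and matches the paper's approach: the well-definedness argument via the cancellation $(S^{-1}P)^{-1}S^{-1}(p_i) = P^{-1}(p_i)$, showing $\Phi_X$ commutes with $\alpha_S$, is exactly the computation the paper gives. The group-action axioms and transitivity are only cited by the paper (deferred to Lemma 4.2 of the earlier van Goor--Mahony work), and your direct component-wise verification, including the telescoping of the $T_C$-conjugations in the landmark slot and the constructive choice $A = P^{-1}P'$, $w = v - R_A v'$, $Q_i(q_i') = q_i$, is the standard argument that reference supplies.
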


\begin{lemma}\label{lem:output_action}
The map $\rho : \vinsG \times \vinsN \to \vinsN$ defined by
\begin{align}
    \rho((A,w,Q_i), (\eta_i)) := (R_{Q_i}^\top \eta_i),
\end{align}
is a (right) group action.
Additionally, the measurement function \eqref{eq:measurement_function} is equivariant with respect to the actions $\phi$ \eqref{eq:manifold_state_action} and $\rho$, that is,
\begin{align*}
    h(\phi((A,w,Q_i), [P, v, p_i])) = \rho((A,w,Q_i), h([P, v, p_i])),
\end{align*}
for all $(A,w,Q_i) \in \vinsG$ and $[P, v, p_i] \in \vinsM$.
\end{lemma}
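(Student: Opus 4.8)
The plan is to verify the group-action axioms for $\rho$ first, and then establish equivariance of $h$ by a direct chase through the definitions. For the action axioms, note that $\rho$ depends on the group element only through the rotational parts $R_{Q_i}$ of the $\SOT(3)$ factors, so the argument reduces to a statement about the standard left-translation-type action of the $R_{Q_i}$ on $\Sph^2$. First I would check $\rho(\id, (\eta_i)) = (R_{I_3}^\top \eta_i) = (\eta_i)$, which is immediate since the identity in $\vinsG$ has $Q_i = I_3$ and hence $R_{Q_i} = I_3$. For compatibility, I would use the group product in $\vinsG$, whose $\SOT(3)^n$ component multiplies componentwise as $Q^1_i Q^2_i$, giving $R_{Q^1_i Q^2_i} = R_{Q^1_i} R_{Q^2_i}$. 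Then
\begin{align*}
    \rho(X^1 X^2, (\eta_i)) = \left( (R_{Q^1_i} R_{Q^2_i})^\top \eta_i \right) = \left( R_{Q^2_i}^\top R_{Q^1_i}^\top \eta_i \right) = \rho\left(X^2, \rho(X^1, (\eta_i))\right),
\end{align*}
confirming the right-action identity $\rho(X^1 X^2, \cdot) = \rho(X^2, \rho(X^1, \cdot))$ with $X^j = (A^j, w^j, Q^j_i)$.

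For the equivariance statement, I would work componentwise, fixing an index $k$ and a representative $(P, v, p_i)$ of the class $[P,v,p_i]$. By Lemma \ref{lem:state_action} the state action $\phi$ is represented by $\Phi$, whose landmark component is $p_k \mapsto PA T_C Q_k^{-1} T_C^{-1} P^{-1}(p_k)$. Substituting into the measurement formula \eqref{eq:measurement_function}, the $k$-th output of $h(\phi(X, [P,v,p_i]))$ becomes
\begin{align*}
    \pi_{\Sph^2}\left( (PA\, T_C)^{-1} \left( PA\, T_C Q_k^{-1} T_C^{-1} P^{-1}(p_k) \right) \right)
    = \pi_{\Sph^2}\left( Q_k^{-1} T_C^{-1} P^{-1}(p_k) \right).
\end{align*}
Here the pose terms $PA\,T_C$ cancel cleanly against their inverses because the landmark action in $\Phi$ was engineered with exactly this conjugation structure; this cancellation is the conceptual heart of why the symmetry is compatible with the vision measurements.

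The final step is to move the $\SOT(3)$ element $Q_k^{-1}$ out through the sphere projection. Writing $Q_k^{-1}(q) = c_{Q_k}^{-1} R_{Q_k}^\top q$ and using that $\pi_{\Sph^2}$ is invariant under positive scaling and equivariant under rotations, I would compute
\begin{align*}
    \pi_{\Sph^2}\left( c_{Q_k}^{-1} R_{Q_k}^\top\, T_C^{-1} P^{-1}(p_k) \right)
    = R_{Q_k}^\top\, \pi_{\Sph^2}\left( T_C^{-1} P^{-1}(p_k) \right)
    = R_{Q_k}^\top\, h^k(P,v,p_i),
\end{align*}
where the last equality recognises $\pi_{\Sph^2}((PT_C)^{-1}(p_k)) = h^k(P,v,p_i)$. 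The right-hand side is exactly the $k$-th component of $\rho(X, h([P,v,p_i]))$, completing the argument. The main obstacle I anticipate is purely bookkeeping: confirming that the positive scalar $c_{Q_k}$ factors through $\pi_{\Sph^2}$ harmlessly and that $R_{Q_k}^\top$ passes through as claimed, i.e. that $\pi_{\Sph^2}$ genuinely commutes with the $\SO(3)$ action $\pi_{\Sph^2}(R q) = R\,\pi_{\Sph^2}(q)$; both follow directly from the definition $\pi_{\Sph^2}(q) = q/\Vert q \Vert$ together with $\Vert Rq \Vert = \Vert q \Vert$. I would also note that well-definedness on the quotient $\vinsM$ is inherited from Lemma \ref{lem:state_action} and the $\alpha$-invariance of $h$ in Proposition \ref{prop:invariance_action}, so the computation above is independent of the chosen representative.
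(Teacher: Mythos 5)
Your proof is correct and complete. The paper itself omits the proof of this lemma, deferring to \cite{2019_vangoor_GeometricObserver}; your direct verification --- checking the right-action axioms via $R_{Q^1_i Q^2_i} = R_{Q^1_i} R_{Q^2_i}$, and obtaining equivariance from the cancellation of $PA\,T_C$ against its inverse together with the facts that $\pi_{\Sph^2}$ is invariant under positive scaling and commutes with rotations --- is exactly the standard argument that reference contains, and your closing remark that well-definedness on the quotient is inherited from Lemma \ref{lem:state_action} and Proposition \ref{prop:invariance_action} correctly ties off the only remaining loose end.
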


The existence of a transitive action by the VI-SLAM group on the VI-SLAM manifold guarantees the existence of an equivariant lift \cite{2020_mahony_EquivariantSystems}.
That is, the system dynamics may be lifted to the symmetry group.

\begin{lemma}\label{lem:lift}
    The map $\Lambda : \vinsT \times (\R^3 \times \R^3) \to \vinsg$, given by
    \begin{align}
        \Lambda & ((P, v, p_i), (\Omega, a)) \\
        &:= \left(
            U(\Omega, v), \; -a + g R_P^\top \eb_3, \; \left( \Omega_C + \frac{q_i^\times v_C}{\Vert q_i \Vert^2}, \frac{q_i^\top v_C}{\Vert q_i \Vert^2} \right)_i
        \right), \notag \\
        q_i &:= (P T_C)^{-1} (p_i), \qquad
        (\Omega_C, v_C) := \Ad_{T_C}^{-1} (\Omega, v), \notag
    \end{align}
    is a lift \cite{2020_mahony_EquivariantSystems} of the system function \eqref{eq:system_function}.
    That is,
    \begin{align*}
        \tD_{E} |_\id \phi_{(P, v, p_i)} (E) \Lambda((P, v, p_i), (\Omega, a)) = f_{(\Omega, a)} (P, v, p_i).
    \end{align*}
    Moreover, the induced map $\Lambda : \vinsM \times (\R^3 \times \R^3) \to \vinsg$ is well-defined and also a lift for the system on the VI-SLAM manifold.
\end{lemma}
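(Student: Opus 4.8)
The plan is to verify the defining lift identity directly, by computing the infinitesimal generator of the total-space action $\Phi$ \eqref{eq:total_state_action} associated with the Lie algebra element $\Lambda((P,v,p_i),(\Omega,a)) \in \vinsg$ and matching it, slot by slot, against $f_{(\Omega,a)}(P,v,p_i)$. Concretely, I would pick any smooth curve $X(t) = (A(t),w(t),Q_i(t))$ in $\vinsG$ with $X(0) = \id$ and velocity $\dot X(0) = \Lambda((P,v,p_i),(\Omega,a))$, so that $\dot A(0) = U(\Omega,v)$, $\dot w(0) = -a + gR_P^\top \eb_3$, and each $\dot Q_i(0) \in \sot(3)$ has rotational part $\Omega_C + q_i^\times v_C/\Vert q_i\Vert^2$ and scalar part $q_i^\top v_C/\Vert q_i\Vert^2$. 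Then $\tD_E|_\id \Phi_{(P,v,p_i)}(E)[\Lambda] = \ddt\big|_{t=0}\Phi(X(t),(P,v,p_i))$, and I differentiate each of the three components of \eqref{eq:total_state_action} at $t=0$.

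The first two slots are immediate. Differentiating $PA(t)$ gives $P\dot A(0) = PU(\Omega,v) = \dot P$. For the velocity slot, using $A(0)=I$, $w(0)=0$ and $\ddt\big|_{t=0}R_{A(t)}^\top = -\Omega^\times$, one finds $\ddt\big|_{t=0}R_{A(t)}^\top(v-w(t)) = -\Omega^\times v - \dot w(0) = -\Omega^\times v + a - gR_P^\top\eb_3 = \dot v$, matching \eqref{eq:system_function}.

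The landmark slot is the main obstacle and the place where the precise form of $\Lambda$ is needed. Writing $q_i = (PT_C)^{-1}(p_i)$, the inner chain $T_C^{-1}P^{-1}(p_i) = q_i$ collapses the $i$-th slot to $P A(t) T_C Q_i(t)^{-1}(q_i)$, which equals $p_i$ at $t=0$; so I must show its $t$-derivative vanishes, consistent with $\dot p_i = 0$. I would first differentiate the innermost scaled rotation: since $Q_i(t)^{-1}$ acts by $(1/c_{Q_i})R_{Q_i}^\top$, a short computation using the identity $(q_i \times v_C)\times q_i = \Vert q_i\Vert^2 v_C - (q_i^\top v_C)q_i$ shows that the two landmark-dependent terms of $\dot Q_i(0)$ cancel exactly, leaving the clean velocity $\ddt\big|_{t=0}Q_i(t)^{-1}(q_i) = -\Omega_C^\times q_i - v_C$. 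Pushing this through the differential $R_{T_C}$ of $T_C$, and adding the generator $\Omega^\times y + v$ of $\dot A(0) = U(\Omega,v)$ evaluated at $y = T_C(q_i) = R_{T_C}q_i + x_{T_C}$, reduces the claim to the single identity $\Omega^\times(R_{T_C}q_i + x_{T_C}) + v - R_{T_C}\Omega_C^\times q_i - R_{T_C}v_C = 0$. Substituting the adjoint relations $\Omega = R_{T_C}\Omega_C$ and $v = x_{T_C}^\times\Omega + R_{T_C}v_C$ implied by $(\Omega_C,v_C) = \Ad_{T_C}^{-1}(\Omega,v)$, together with $R_{T_C}(\Omega_C\times q_i) = \Omega\times(R_{T_C}q_i)$, collapses this to $\Omega^\times x_{T_C} + x_{T_C}^\times\Omega = 0$, which holds by antisymmetry of the cross product. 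Multiplying through by the constant $R_P$ then gives $\dot p_i = 0$, completing the lift identity on $\vinsT$.

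For the final assertion I would show $\Lambda$ is invariant under the gauge action $\alpha$ and hence descends to $\vinsM$. Under $\alpha(S,\cdot)$ with $S \in \SE_{\eb_3}(3)$ the velocity $v$ is fixed, so $U(\Omega,v)$, $\Omega_C$ and $v_C$ are unchanged; the term $gR_P^\top\eb_3$ is preserved because $R_P \mapsto R_S^\top R_P$, whence $R_P^\top\eb_3 \mapsto R_P^\top R_S\eb_3 = R_P^\top\eb_3$ using $R_S\eb_3 = \eb_3$; and $q_i = (PT_C)^{-1}(p_i)$ is preserved because $(S^{-1}PT_C)^{-1}(S^{-1}(p_i)) = (PT_C)^{-1}(p_i)$. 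Thus every slot of $\Lambda$ is $\alpha$-invariant, so the induced map on $\vinsM \times (\R^3\times\R^3)$ is well defined. The lift property then transfers along the projection: by definition \eqref{eq:manifold_state_action}, $\phi_{[P,v,p_i]} = \pi\circ\Phi_{(P,v,p_i)}$, so $\tD_E|_\id\phi_{[P,v,p_i]}(E)[\Lambda] = \td\pi[f_{(\Omega,a)}(P,v,p_i)]$, and since $f$ descends to $\vinsM$ by Proposition \ref{prop:invariance_action}, this equals $f_{(\Omega,a)}([P,v,p_i])$, as required.
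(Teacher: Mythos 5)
Your proof is correct. The paper itself omits this proof entirely, deferring to the corresponding arguments in \cite{2019_vangoor_GeometricObserver}, so there is nothing to compare against directly; your direct verification --- differentiating each slot of $\Phi$ along a curve through the identity with velocity $\Lambda$, using the cross-product identity to collapse the $\SOT(3)$ generator to $-\Omega_C^\times q_i - v_C$, and invoking the adjoint relations to cancel the landmark derivative --- is exactly the computation the lemma requires, and your $\alpha$-invariance argument correctly establishes the descent to $\vinsM$ via $\phi_{[P,v,p_i]} = \pi\circ\Phi_{(P,v,p_i)}$ and Proposition \ref{prop:invariance_action}.
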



\subsection{Origin Choice and Local Coordinates}

The EqF design procedure requires a choice of origin configuration and local coordinates.
Let $\mr{\Xi}  = (\mr{P}, \mr{v}, \mr{p}_i) \in \vinsT$ denote a fixed \emph{origin configuration} and set $\mr{\xi} = [\mr{\Xi}] = [\mr{P}, \mr{v}, \mr{p}_i] \in \vinsM$.
The filter state for the EqF is an element of the VI-SLAM group, $\hat{X} \in \vinsG$, and the associated state estimate is obtained by applying the group action $\hat{\Xi} =  \Phi(\hat{X}, \mr{\Xi})$ \eqref{eq:total_state_action} \cite{2020_mahony_EquivariantSystems}.

Choose the map $\varepsilon : \calU_{\mr{\xi}} \subset \vinsM \to \R^{5+3n}$ defined by
\begin{align} \label{eq:state_local_coords}
    \varepsilon([P, v, p_i])
    &:= \begin{pmatrix}
        \vartheta_{R_{\mr{P}}^\top \eb_3}(R_P^\top \eb_3) \\
        v - \mr{v} \\
        T_C^{-1}(P^{-1}(p_1) - {\mr{P}}^{-1}(\mr{p}_1)) \\
        \vdots \\
        T_C^{-1}(P^{-1}(p_n) - {\mr{P}}^{-1}(\mr{p}_n))
    \end{pmatrix},
\end{align}
to be the coordinate chart for $\vinsM$.
Let $(\mr{y}_i) = h(\mr{\xi})$.
Choose the map $\delta : \calU_{(\mr{y}_i)} \subset \vinsN \to \R^{2n}$ defined by
\begin{align} \label{eq:output_local_coords}
    \delta(y_1, ..., y_n)
    &:= \left( \vartheta_{\mr{y}_1}(y_1), ..., \vartheta_{\mr{y}_n}(y_n) \right),
\end{align}
to be the local coordinate chart for $\vinsN$ where $\vartheta$ is the stereographic projection of the sphere \eqref{eq:StereoGraphic}. 
Note that $\varepsilon(\mr{\xi}) = 0$ and $\delta(\mr{y}_i) = 0$.

\subsection{Input Bias}

We model real-world IMU measurements as having a constant (or slowly time-varying) bias,
\begin{align*}
    \Omega_m &= \Omega + b_\Omega, & a_m &= a + b_a, \\
    \dot{b}_\Omega &= 0, & \dot{b}_a &= 0,
\end{align*}
where $\Omega_m, a_m \in \R^3$ are the measured angular velocity and linear acceleration, respectively, and $b_\Omega, b_a \in \R^3$ are the biases.
Let $b = (b_\Omega, b_a) \in \R^6$.

\subsection{EqF with Bias Dynamics}

Let $\hat{X} \in \vinsG$ be the observer state \cite{2020_mahony_EquivariantSystems} and let $\hat{b} = (\hat{b}_\Omega, \hat{b}_a) \in \R^6$ be the estimated bias with dynamics
\begin{align*}
    \dot{\hat{X}} &= \hat{X} \Lambda( \phi(\hat{X}, \mr{\xi}), (\hat{\Omega}, \hat{a})) - \Delta \hat{X}, &
    \hat{X}(0) &= \id, \\
    \dot{\hat{b}} &= -\beta, &
    \hat{b}(0) &= 0,
\end{align*}
where $\hat{\Omega} = \Omega_m - \hat{b}_\Omega$, $\hat{a} = a_m - \hat{b}_a$ and $\Delta \in \vinsg$ and $\beta \in \R^6$ are correction terms.

Let $\mr{A}_t, B_t, \mr{C}$ be the EqF state, input, and output matrices, respectively, as described in Appendix \ref{app:matrices}.
Let $\Sigma \in \Sym_+(11+3n)$ be the Riccati term of the EqF with bias, with dynamics
\begin{align*}
    \dot{\Sigma}
    &= \begin{pmatrix}
        0 & 0 \\ -B_t & A_t
    \end{pmatrix} \Sigma + \Sigma \begin{pmatrix}
        0 & -B_t^\top \\ 0 & A_t^\top
    \end{pmatrix}
    + \begin{pmatrix}
        0 & 0 \\ 0 & B_t R_t B_t^\top
    \end{pmatrix}
    \\ &\hspace{0.5cm}
    + P_t
    - \Sigma \begin{pmatrix}
        0 & 0 \\ 0 & {\mr{C}}^\top Q_t^{-1} \mr{C}
    \end{pmatrix}
    \Sigma,
    \qquad
    \Sigma(0) = \Sigma_0
\end{align*}
where $\Sigma_0 \in \Sym_+(11+3n)$ is the initial Riccati term, and $P_t \in \Sym_+(11+3n)$, $R_t \in \Sym_+(6)$, and $Q_t \in \Sym_+(2n)$ are positive definite gain matrices.

Let $\xi = [P, v, p_i] \in \vinsM$ denote the true state of the system and let $e = \phi(\hat{X}^{-1}, \xi) \in \vinsM$ denote the global EqF error.
The correction term of the EqF with bias is determined by the lift of the Kalman update to the Lie-group.
That is
\begin{align}
    \Delta &:= \tD_E |_\id \phi_{\mr{\xi}}(E)^\dagger \cdot \tD_\xi |_{\mr{\xi}} \varepsilon(\xi)^{-1} \Gamma,
    \label{eq:eqf-innovation} \\
    \begin{pmatrix}
        \beta \\ \Gamma
    \end{pmatrix}
    &:= \Sigma
    \begin{pmatrix}
        0 & {\mr{C}}^\top
    \end{pmatrix}
    Q_t^{-1}
    \delta( \rho( \hat{X}^{-1}, h(\xi) ) ),
\end{align}
where $\tD_E |_\id \phi_{\mr{\xi}}(E)^\dagger$ is a suitably chosen right-inverse of $\tD_E |_\id \phi_{\mr{\xi}}(E)$.
Then the EqF state estimate is given by
\begin{align}
    (\hat{P}, \hat{v}, \hat{p}_i) := \Phi((\hat{A}, \hat{w}, \hat{Q}_i), (\mr{P}, \mr{v}, \mr{p}_i)). \label{eq:estimated-state}
\end{align}

\subsection{Bundle Lift}

The EqF is designed directly on the VI-SLAM manifold to overcome the unobservability of the problem.
However, in practice it is usual to report the state as an element of the total space.
Moreover, in lifting to the total space it is desirable to do so in such a manner to minimize the error introduced into the trajectory estimation.
The correction term is therefore lifted from the manifold to the total space by minimising the motion of landmark points with respect to the current choice of inertial frame, subject to a weighting term derived from the EqF Riccati matrix $\Sigma$.

Define the weighted cost function $J : \tT_{(\hat{P}, \hat{v}, \hat{p}_i)} \vinsT \to \R^+$ by
\begin{align*}
    J(\hat{P}U, u_v, u_{p_i}) :=
    \left\Vert \td \varepsilon \cdot \td \pi \cdot \td \Phi_{(\hat{A}, \hat{w}, \hat{Q}_i)}^{-1} (0, 0, u_{p_i}) \right\Vert^2_\Sigma,
\end{align*}
where $\Sigma$ is the EqF Riccati term, and $\Vert \cdot \Vert_\Sigma$ is the Mahalanobis norm.

Let $\Gamma \in \R^{5+3n}$ be the correction term defined in \eqref{eq:eqf-innovation}.
The correction on the total space $\Gamma' \in \tT_{(\mr{P}, \mr{v}, \mr{p}_i)} \vinsT$ is the solution of the optimisation problem
\begin{align*}
    \text{minimise } & J ( \tD_{\Xi} |_{(\mr{P}, \mr{v}, \mr{p}_i)} \Phi_{(\hat{A}, \hat{w}, \hat{Q}_i)}(\Xi) (\Gamma') ), \\
    \text{subject to } & \tD_\xi |_{[\mr{P}, \mr{v}, \mr{p}_i]} \varepsilon(\xi) \cdot \tD_\Xi |_{(\mr{P}, \mr{v}, \mr{p}_i)} \pi(\Xi) \Gamma' = \Gamma.
\end{align*}
This may be solved using weighted linear least-squares.

Finally, the correction term $\Delta \in \vinsg$ is chosen by
\begin{align*}
    \Delta = \tD_E |_\id \Phi_{(\mr{P}, \mr{v}, \mr{p}_i)}(E)^\dagger \Gamma',
\end{align*}
where $\tD_E |_\id \Phi_{(\mr{P}, \mr{v}, \mr{p}_i)}(E)^\dagger$ is an arbitrary fixed right-inverse of $\tD_E |_\id \Phi_{(\mr{P}, \mr{v}, \mr{p}_i)}(E)$.

\section{Experiments}

To demonstrate practical performance, we evaluated the proposed EqF on a number of sequences from the EuRoC dataset \cite{2016_burri_EuRoCMicro}.
Vision measurements were obtained by applying OpenCV functions \texttt{goodFeaturesToTrack} and \texttt{calcOpticalFlowPyrLK} to detect and track features.
The maximum number of features was kept to 50, and new features were detected whenever the number of features being tracked fell below 40.
The EqF gain matrices and parameters were kept consistent across all trials, and the observer dynamics were discretised using Euler integration.
The proposed system was implemented in \texttt{c++} and our code is available online\footnote{\url{https://github.com/pvangoor/eqf_vio}}.

We limited our attention to the easy and medium sequences in the Vicon rooms, as we found the Machine Hall and ``hard'' sequences to be too challenging for our vision processing front-end to track features reliably.
Figure \ref{fig:trajectories} shows the estimated trajectories compared with the ground truth.
Table \ref{tab:results} shows the Root Mean Square Error (RMSE) between ground truth trajectory of the robot and the estimated position reported by our system for each sequence.
The RMSE of popular EKF-based VI-SLAM solutions SVO-MSF, MSCKF, and ROVIO (obtained from \cite{2018_delmerico_BenchmarkComparison}), and the invariance based R-UKF-LG (obtained from \cite{2018_brossard_InvariantKalman}) are shown for comparison.
Due to significant differences in system architectures, the tuning parameters for each system cannot be compared directly, and this contributes to the differences in outcomes in Table \ref{tab:results}.

\begin{table}[!htb]
    \centering
    \begin{tabular}{c|ccccc}
        & \multicolumn{4}{c}{RMSE (m)} \\
        & R-UKF-LG\footnotemark & SVO-MSF & MSCKF  & ROVIO & EqF \\
        & \cite{2018_brossard_InvariantKalman} & \cite{2016_forster_SVOSemidirect} & \cite{2007_mourikis_MultistateConstraint} & \cite{2015_bloesch_RobustVisual} & * \\
        \hline
        V1 01 & 0.55 & 0.40 & 0.34 &          0.10 & \textbf{0.07} \\
        V1 02 & 0.40 & 0.63 & 0.20 & \textbf{0.10} &          0.11 \\
        V2 01 & 0.37 & 0.20 & 0.10 &          0.12 & \textbf{0.08} \\
        V2 02 & 0.47 & 0.37 & 0.16 &          0.14 & \textbf{0.13} \\
        \hline
    \end{tabular}
    \caption{Comparison of RMSE on the EuRoC dataset.}
    \label{tab:results}
\end{table}
\footnotetext{The R-UKF-LG \cite{2018_brossard_InvariantKalman} uses both the left and right cameras for stereo vision rather than monocular vision.}


The proposed EqF clearly outperforms competitor EKF-based algorithms shown in Table \ref{tab:results}.
ROVIO \cite{2015_bloesch_RobustVisual} achieves the lowest RMSE on \texttt{V1\_02}, likely thanks to the tight-coupling between the vision front-end and the filter back-end.
Figure \ref{fig:error_distribution} shows the distribution of the error of the trajectory estimated by our system for each sequence.

\begin{figure}[!htb]
    \centering
    \includegraphics[trim=0cm 0.5cm 0cm 1cm, width=0.6\linewidth]{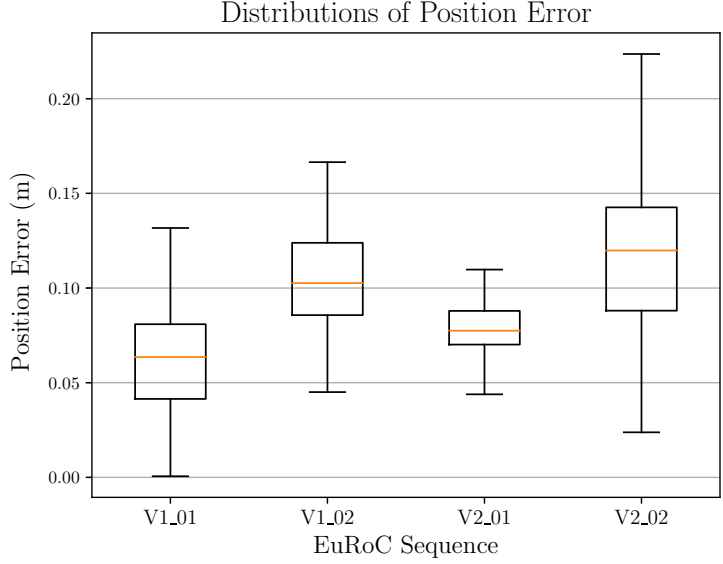}
    \caption{The distributions of position error of the estimated trajectories. }
    \label{fig:error_distribution}
\end{figure}

The pie chart in Figure \ref{fig:system_breakdown} shows the mean processing time per frame of each component of the full system.
The processing times were recorded on a desktop computer with an Intel\textregistered  Core\texttrademark  i7-8700 CPU @ 3.20GHz $\times$ 12 and 16GB of RAM.
The total processing time per frame was recorded at 5.4ms or 183.7Hz.
Of the total time, the vision front-end consumes 4.2ms and the filter consumes 1.2ms.
This high speed combined with the high accuracy reported in Table \ref{tab:results} clearly demonstrate the real-world potential of the proposed system.

\begin{figure}
    \centering
    \includegraphics[trim=0cm 1.5cm 0cm 0.5cm, clip=true, width=0.6\linewidth]{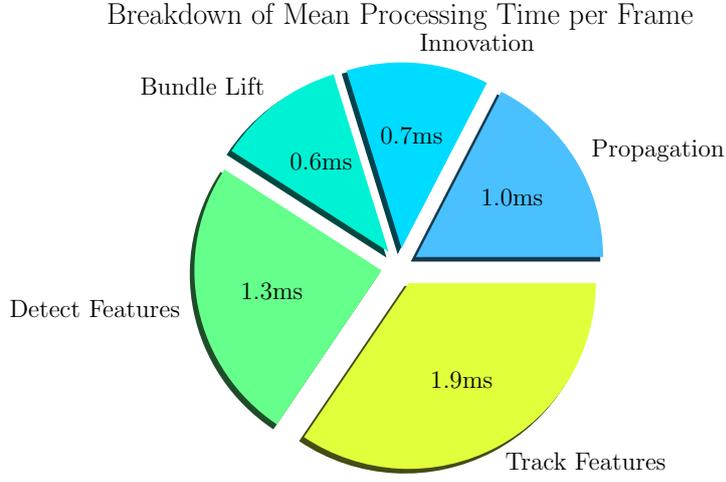}
    \caption{The mean processing time per frame for each component of the proposed system.}
    \label{fig:system_breakdown}
\end{figure}

\section{Conclusion}

This paper makes the following contributions.
\begin{itemize}
    \item The VI-SLAM manifold is constructed as a state space for visual inertial SLAM where the unobservability of the system associated with a change of reference frame is factored out.
    \item The VI-SLAM group is developed and shown to act equivariantly on the VI-SLAM total space, manifold and output space.
    \item An Equivariant Filter (EqF) is designed according to \cite{2020_vangoor_EquivariantFilter}, and coupled with a simple vision processing front-end to achieve state-of-the-art results on the EuRoC MAV dataset \cite{2016_burri_EuRoCMicro}.
\end{itemize}


\appendix{Proofs}
\label{app:proofs}


\begin{proof}[Proof of Proposition \ref{prop:invariance_action}]
Choosing $S \in \SE_{\eb_3}(3)$, $(P, v, p_i) \in \vinsT$ and $(\Omega, a) \in \R^3 \times \R^3$ arbitrary, one has
\begin{align*}
    f_{(\Omega, a)} (\alpha(S, & (P, v, p_i)))
    = f_{(\Omega, a)} (S^{-1} P, v, S^{-1}(p_i)), \\
    &= \left(
        S^{-1} P U, \; -\Omega^\times v + a - g R_P^\top R_S \eb_3, \; 0
    \right), \\
    &= \left(
        S^{-1} (P U), \; -\Omega^\times v + a - g R_P^\top \eb_3, \; 0
    \right), \\
    &= \td \alpha_S f_{\Omega, a} (P, v, p_i),
\end{align*}
as required.
Similarly, for any $k$, the partial measurement function $h^k$ \eqref{eq:measurement_function} satisfies
\begin{align*}
    h^k(\alpha(S, (P, v, p_i)))
    &= h^k (S^{-1} P, v, S^{-1}(p_i)), \\
    &= \pi_{\Sph^2} \left( (S^{-1} P T_C )^{-1} S^{-1} (p_k) \right), \\
    &= \pi_{\Sph^2} \left( (P T_C )^{-1} (p_k) \right), \\
    &= h^k(S, (P, v, p_i)),
\end{align*}
and the invariance of the full measurement function $h$ follows immediately.
\end{proof}

\begin{proof}[Proof of Lemma \ref{lem:state_action}]
The proof that $\Phi$ is a group action closely follows that of \cite[Lemma 4.2]{2019_vangoor_GeometricObserver}, and has been omitted from this paper to save space.
To see that $\phi$ is well-defined, observe that
\begin{align*}
    &\hspace{-0.1cm}\phi ( (A,w,Q_i), [S^{-1} P, v, S^{-1}(p_i)]) \\
    &= [\Phi((A,w,Q_i), (S^{-1} P, v, S^{-1}(p_i)))], \\
    &= [S^{-1} P A, R_A^\top(v - w), S^{-1} PA T_C Q_i^{-1} T_C^{-1} P^{-1} S S^{-1} (p_i)], \\
    &= [P A, R_A^\top(v - w), PA T_C Q_i^{-1} T_C^{-1} P^{-1} (p_i)], \\
    &= [\Phi((A,w,Q_i), (P, v, p_i))], \\
    &= \phi( (A,w,Q_i), [P, v, (p_i)]),
\end{align*}
as required.
\end{proof}

The proofs of Lemmas \ref{lem:output_action} and \ref{lem:lift} closely follow proofs previously published in \cite{2019_vangoor_GeometricObserver} and have been omitted from this paper to save space.

\appendix{EqF Matrices}
\label{app:matrices}

Here we present the state, input, and output matrices of the EqF proposed in Section \ref{sec:eqf-design}.
Let $(\mr{P}, \mr{v}, \mr{p}_i) \in \vinsT$ denote the origin coordinates, let $(\hat{A}, \hat{w}, \hat{Q}_i) \in \vinsG$ denote the observer state, let $(\hat{P}, \hat{v}, \hat{p}_i)$ denote the estimated state as in \eqref{eq:estimated-state}, and let the input to the system be $(\Omega, a) \in \R^3 \times \R^3$.
Define $(\hat{\Omega}_C, \hat{v}_C) := \Ad_{T_C}^{-1} (\Omega, \hat{v})$, $\hat{q}_i := (\hat{P} T_C)^{-1} (\hat{p}_i)$.

The EqF state matrix $\mr{A}_t$ is given by \cite[Lemma A.1]{2020_vangoor_EquivariantFilter},
\begin{align*}
    \mr{A}_t
    &= \begin{pmatrix}
        0 & 0 & 0 & \cdots & 0\\
        -g \tD_{z} |_{0} \vartheta_{\eb_3}^{-1}(z) & 0 & 0 & \cdots & 0 \\
        0 & - \hat{Q}_1 R_{\hat{A} T_{C}}^\top &  A_{\hat{q}_1} & 0 & 0 \\
        \vdots & \vdots & 0 & \ddots &  0 \\
        0 & - \hat{Q}_n R_{\hat{A} T_{C}}^\top & 0 & 0 & A_{\hat{q}_n}
    \end{pmatrix},
\end{align*}
where,
\begin{align*}
    A_{\hat{q}_i}
    &:= - \Vert \hat{q}_i \Vert^{-2}
    \hat{Q}_i (\hat{q}_i^\times v_C^\times
    - 2 v_C \hat{q}_i^\top
    + \hat{q}_i v_C^\top
    ) \hat{Q}_i^{-1} .
\end{align*}

The EqF input matrix $B_t$ is given by
\begin{align*}
    B_t &=
    \tD_\xi |_{[\mr{P}, \mr{v}, \mr{p}_i]} \varepsilon(\xi)
    \cdot \tD_\xi |_{[\hat{P},\hat{v},\hat{q}_i]} \phi_{(\hat{A},\hat{w},\hat{Q}_i)^{-1}}(\xi)
    \\ &\hspace{1.5cm}
    \cdot \tD_E |_\id \phi_{[\hat{P},\hat{v},\hat{q}_i]}(E)
    \cdot \tD_{u} |_{(\Omega, a)} \Lambda([\hat{P},\hat{v},\hat{q}_i], u), \\
    &= \begin{pmatrix}
        \tD_{\eta} |_{R_{\mr{P}}^\top \eb_3} \vartheta_{R_{\mr{P}}^\top \eb_3 }(\eta) R_{\hat{A}} (R_{\hat{A}}^\top \eb_3)^\times & 0 \\
        R_{\hat{A}} \hat{v}^\times & R_{\hat{A}} \\
        \hat{Q}_1 (\hat{q}_1^\times R_{T_C}^\top + R_{T_C}^\top x_{T_C}^\times) & 0 \\
        \vdots & \vdots \\
        \hat{Q}_n (\hat{q}_n^\times R_{T_C}^\top + R_{T_C}^\top x_{T_C}^\times) & 0
    \end{pmatrix}.
\end{align*}

Let $\mr{q}_i = (\mr{P} T_C)^{-1} (\mr{p}_i)$ and $(\mr{y}_i) = h(\mr{P}, \mr{v}, \mr{p}_i)$.
Then the (constant) EqF output matrix $\mr{C}$ is given by
\begin{align*}
    \mr{C}
    &= \begin{pmatrix}
        0 & 0 &  \mr{C}_1 & 0 & \cdots & 0 \\
        \vdots & \vdots & 0 & \ddots &  & \vdots \\
        \vdots & \vdots & \vdots & & \ddots & 0 \\
        0 & 0 & 0 & \cdots & 0 & \mr{C}_n
    \end{pmatrix},
\end{align*}
where each $\mr{C}_i \in \R^{2 \times 3}$ is given by
\begin{align*}
    \mr{C}_i = \tD_{\eta} |_{\mr{y}_i} \vartheta_{\mr{y}_i}(\eta) \frac{1}{\Vert \mr{q} \Vert} \left( I_3 - \frac{{\mr{q}} {\mr{q}}^\top}{{\mr{q}}^\top {\mr{q}}} \right).
\end{align*}


\bibliographystyle{plain}
\bibliography{equivariant_vio}

\end{document}